\numberwithin{equation}{section}
\tikzstyle{startstop} = [rectangle, rounded corners, 
\tikzstyle{io} = [trapezium, 
\tikzstyle{process} = [rectangle, 
\tikzstyle{decision} = [diamond, 
\tikzstyle{arrow} = [thick,->,>=stealth]
\theoremstyle{thmstyleone}%
\newtheorem{theorem}{Theorem}[section]
\newtheorem{corollary}[theorem]{Corollary}
\newtheorem{proposition}[theorem]{Proposition}
\theoremstyle{thmstyletwo}%
\theoremstyle{thmstylethree}%
\begin{document}

%\linenumbers

\title[Article Title]{Trend-Adjusted Time Series Models with an Application to Gold Price Forecasting}

%%=============================================================%%
%% Prefix	-> \pfx{Dr}
%% GivenName	-> \fnm{Joergen W.}
%% Particle	-> \spfx{van der} -> surname prefix
%% FamilyName	-> \sur{Ploeg}
%% Suffix	-> \sfx{IV}
%% NatureName	-> \tanm{Poet Laureate} -> Title after name
%% Degrees	-> \dgr{MSc, PhD}
%% \author*[1,2]{\pfx{Dr} \fnm{Joergen W.} \spfx{van der} \sur{Ploeg} \sfx{IV} \tanm{Poet Laureate} 
%%                 \dgr{MSc, PhD}}\email{iauthor@gmail.com}
%%=============================================================%%

\author[a]{\fnm{Sina} \sur{Kazemdehbashi}}

\affil[a]{\orgdiv{} \orgname{hm3369@wayne.edu}}

%%==================================%%
%% sample for unstructured abstract %%
%%==================================%%

\abstract{
Time series data play a critical role in various fields, including finance, healthcare, marketing, and engineering. A wide range of techniques—from classical statistical models to neural network-based approaches such as Long Short-Term Memory (LSTM)—have been employed to address time series forecasting challenges. In this paper, we reframe time series forecasting as a two-part task: (1) predicting the trend (directional movement) of the time series at the next time step, and (2) forecasting the quantitative value at the next time step. The trend can be predicted using a binary classifier, while quantitative values can be forecasted using models such as LSTM and Bidirectional Long Short-Term Memory (Bi-LSTM). Building on this reframing, we propose the Trend-Adjusted Time Series (TATS) model, which adjusts the forecasted values based on the predicted trend provided by the binary classifier. We validate the proposed approach through both theoretical analysis and empirical evaluation. The TATS model is applied to a volatile financial time series—the daily gold price—with the objective of forecasting the next day’s price. Experimental results demonstrate that TATS consistently outperforms standard LSTM and Bi-LSTM models by achieving significantly lower forecasting error. In addition, our results indicate that commonly used metrics such as MSE and MAE are insufficient for fully assessing time series model performance. Therefore, we also incorporate trend detection accuracy, which measures how effectively a model captures trends in a time series.
}

\keywords{Trend-Adjusted Time Series Model, Machine Learning Model, Binary Classification, LSTM, Bi-LSTM, Gold Price Forecasting, Volatile Time Series}

\maketitle

\section{Introduction}\label{sec:intro}
Time series data is widely used across various domains such as finance \parencite{sezer2020financial}, healthcare \parencite{morid2023time}, marketing \parencite{dekimpe2000time}, and environmental science \parencite{kaur2023autoregressive}, where accurate forecasting is crucial for informed decision-making and strategic planning. However, challenges like non-stationarity, high volatility, and noise present significant obstacles. As a result, developing more robust and effective modeling techniques remains a critical and ongoing area of research.

Unlike much of the existing literature, which categorize trend prediction in time series as a separate problem from time series forecasting (see for example \cite{kuo2024hybrid,yin2023research,Mahato2014,sadorsky2021predicting,chen2021novel,chen2021constructing}), this work proposes a unified perspective. We reframe trend prediction as a subproblem within the broader scope of time series forecasting. Based on this viewpoint, we propose the Trend-Adjusted Time Series (TATS) model, which includes three main components. First, a \emph{trend predictor}, which employs a binary classifier to predict whether the time series will increase or decrease at the next time step. Second, a \emph{value forecaster}, which utilizes a linear model (e.g., ARIMA) or a nonlinear model (e.g., LSTM) to forecast the actual value of the time series at the next time step. Third, an \emph{adjustment function} (as defined in (\ref{eq:basic-adjustement-definition})), which adjusts the forecasted value based on the predicted trend and produces the final output of the TATS model.

In this study, we investigate the forecasting of gold prices—a volatile financial time series often considered a challenging prediction task \parencite{9776141}—underscoring the need for accurate and robust time series models. Gold prices are selected not only for their significance in economic decision-making and financial markets but also for the complexity they present in forecasting. In recent years, deep learning models such as LSTM and Bi-LSTM have been widely used as nonlinear time series models to address this challenge \parencite{livieris2020cnn,foroutan2024deep}. Our experimental results demonstrate that the proposed TATS model outperforms both LSTM and Bi-LSTM models on this time series by achieving lower forecasting errors.

The organization of this paper is as follows: Section \ref{sec: literature-review} reviews the relevant literature. Section \ref{sec:methodology} presents the Trend-Adjusted Time Series (TATS) model along with its theoretical analysis. In Section \ref{sec:experimantal_results}, we apply Logistic Regression, Naive Bayes, K-Nearest Neighbor (KNN), Support Vector Machine (SVM), Decision Tree, Random Forest, and XGBoost to predict the trend of gold prices, followed by LSTM, Bi-LSTM, and the proposed TATS model for forecasting the next-day price. Section \ref{sec:conclusion} summarizes the main findings and offers insights and recommendations for future research.

\begin{figure}
    \centering
    \includegraphics[width=0.7\linewidth]{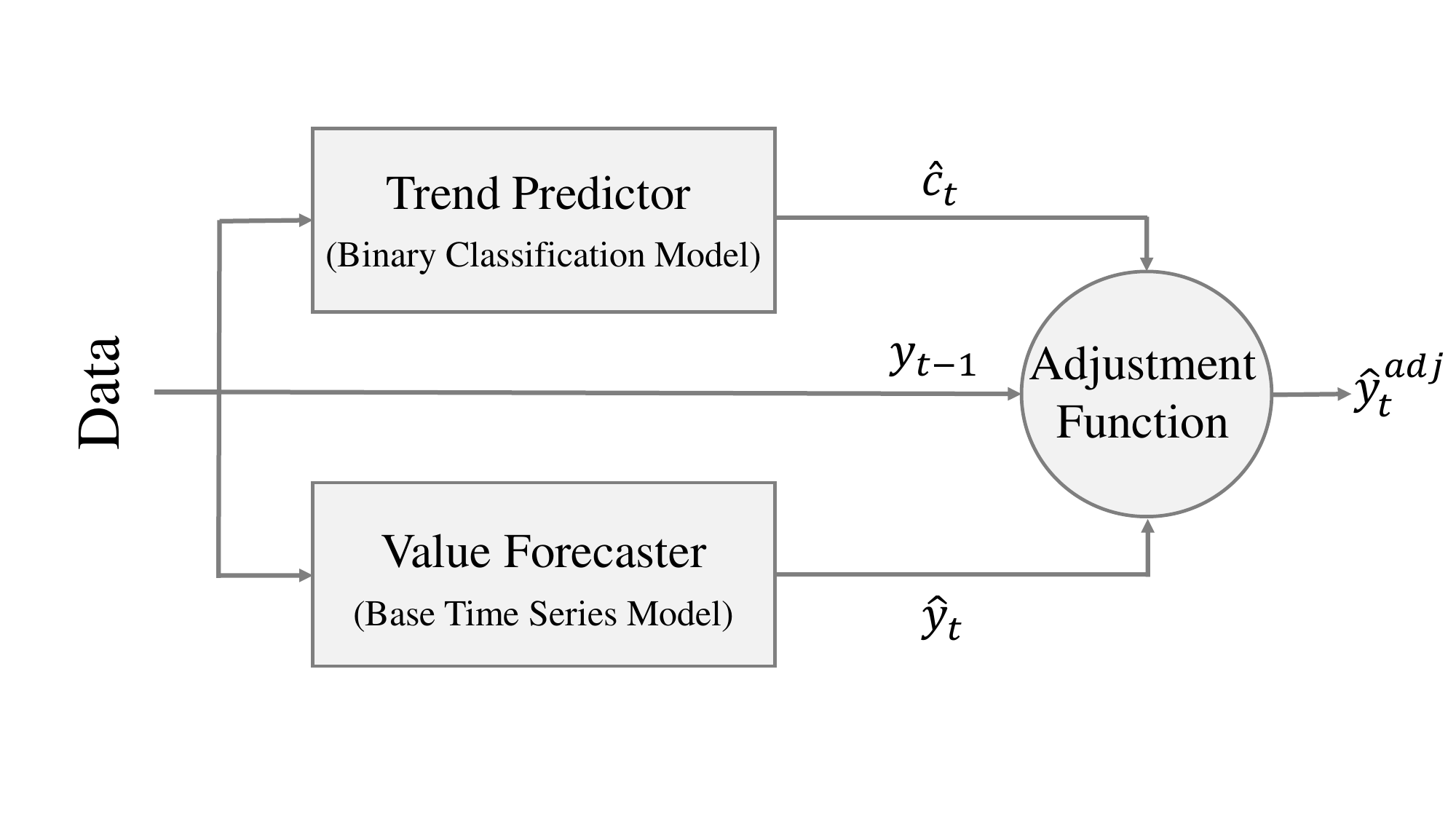}
    \caption{Trend-Adjusted Time Series (TATS) Model}
  \label{fig:TATS_model}
\end{figure}

\section{Literature Review}\label{sec: literature-review}

Forecasting time series data is generally viewed in the literature as involving two key challenges. The first is predicting the trend for future time steps, which can be formulated as a classification problem. The second is forecasting the actual values for upcoming time steps. Accordingly, this section is divided into two parts: first, we review works on trend prediction, and second, we examine studies on value forecasting.

To address the trend prediction challenge, \cite{Mahato2014} employed ensemble models to predict the trend of the gold and silver prices. For gold, their results indicated $85\%$ and $79\%$ accuracy scores by stacking and hybrid bagging, respectively.
\cite{sadorsky2021predicting} employed tree-based classifiers to forecast the price direction of gold and silver. Their results indicated accuracy scores between $85\%$ and $90\%$ for a 20-day forecast horizon using tree bagging, stochastic gradient boosting, and random forests. Also,    \cite{basher2022forecasting} employed tree-based machine learning classifiers and traditional logit econometric models to predict the price direction of gold and Bitcoin. Their results showed that, for 5-day predictions, the accuracy of bagging and random forest models ranged between $75\%$ and $80\%$. For 10-day and 20-day, these models achieved accuracies above $85\%$. In contrast, the logit models demonstrated lower accuracy than random forests in predicting the price directions of both gold and Bitcoin. \cite{gong2024research} used ARIMA and LSTM to predict the daily rise and fall of gold prices. Both models achieved low accuracy in this task.

While predicting time series trends remains a challenging task, the main objective is to forecast future values of the time series data. This problem has been extensively investigated in the literature, spanning approaches from classical statistical models to modern neural network models. Such models have also been employed in forecasting gold prices, which is an important topic within financial time series.

\cite{guha2016gold} used ARIMA models for gold price forecasting, and ARIMA(1,1,1) was selected as the final model for the monthly data from November 2003 to January 2016.  \cite{Tripathy_2017} studied the forecasting of gold prices in India from 1990 to 2015. They applied the ARIMA model to monthly data and showed that ARIMA(0,1,1) provided the best fit. ARIMA has also been used for forecasting other financial data, such as Bitcoin. \cite{hafezi2018forecasting} proposed a hybrid model in which a meta-heuristic algorithm (BAT algorithm) was employed to reinforce the training phase of the artificial neural network (ANN). They referred to their proposed method as BNN (BAT-Neural Network). They showed that the BNN model outperforms both conventional and modern forecasting models, such as the ANN and ARIMA, in gold price forecasting. \cite{krishna2019forecasting}  compared the ARIMA model and feed-forward neural networks (FFNN) for forecasting the daily gold price in India. They used data from 2014 to 2018 and reported that the FFNN model outperformed ARIMA. \cite{madhika2023gold} used ARIMA(0,1,1) and LSTM to predict the daily gold price from 2002 to 2022. They showed that LSTM outperforms the ARIMA model. \cite{nguyen2023machine} investigated gold price prediction using diverse machine learning algorithms such as support vector machines, decision trees, and neural networks, demonstrating their superior performance over traditional statistical methods. The support vector machine emerged as the most accurate, with low error rates and high $R^2$ scores. In \cite{rashidi2025predicting}, the goal of accurately forecasting Gold prices with a minimum of error and a maximum of accuracy is achieved by defining a hybrid modeling approach. This hybrid model combines ARIMA-based time-series forecasting of commodities that drive gold prices, such as oil prices and the S\&P 500, as well as the EUR/USD exchange rate, with additional technical analyses that are reflective of market behavior. Stepwise regression is utilized to determine which of these variables are most significant, and these are then used as inputs within an artificial neural network. This combined ARIMA-Stepwise Regression-Neural Network model is then utilized to accurately predict gold prices. This study showed that the combined model is significantly more accurate than both basic time-series, regression, and stepwise regression models.

The contributions of this paper are summarized as follows. First, we propose the Trend-Adjusted Time Series (TATS) model, which unifies trend prediction and value forecasting into a single framework. We also provide theoretical analysis showing the conditions under which the TATS model yields improved time-series forecasts with lower forecasting error. In addition, we derive a mathematical formula that computes the approximate lower bound on error reduction achievable by using the TATS model. Second, we conduct experiments for one-day-ahead forecasting of gold price data and demonstrate the effectiveness of the TATS model compared with the popular neural network models LSTM and Bi-LSTM.

\section{Methodology}\label{sec:methodology}

\begin{table}
\caption{Notation definitions for the TATS model analysis (Fig.~\ref{fig:TATS_model})} \label{tab:Consistent notation} 
\begin{tabular}{@{}ll@{}}
\toprule
Notations & \\
\midrule
$P(D^B)$ & The probability that the trend predictor correctly predicts the trend of the next time step.\\
$P(D^T)$ &The probability that the value forecaster correctly predicts the trend of the next time step,\\
& meaning that its forecast aligns with the true direction.\\

$\hat{c}_t$ &  Predicted trend (direction) by the trend predictor (binary classifier), where $\hat{c}_t \in \{-1, 1\}$;\\
&  1 indicates an increasing trend and -1 indicates a decreasing trend.\\
$y_t$ & Values of the time series at time $t$.\\
$\hat{y}_t$ & Forecast of $y_t$ by the value forecaster.\\
$\hat{y}^{adj}_t$ & Forecasted value of $y_t$ by the TATS model (output of the TATS model).\\
$\Delta y_t$ & First differencing of time series data at time $t$; i.e. $\Delta y_t=y_t-y_{t-1}$.\\
$l_t$ & The value of the loss (error) at time $t$ for the time series model, calculated as $(\hat{y}_t - y_t)^2$.\\
$\alpha$ &  Hyperparameter at adjustment function which has positive value ($\alpha>0$).\\
$\mathcal{T}$ & Set of time steps corresponding to the time series model’s forecasts.\\
%\color{red}%{$\bar{\mathcal{T}}$} &
%\color{red}{Subset of %$\mathcal{T}$ where for each $t \in \bar{\mathcal{T}}$ we have $\Delta y_t \neq 0$.}\\
\botrule
\end{tabular}

\end{table}

In this section, we introduce the TATS model, illustrated in Fig.~\ref{fig:TATS_model}. The TATS model consists of three main components: a trend predictor (a binary classification model), a value forecaster (a base time series model), and an adjustment function (a modification mechanism). The model inputs the forecast from the value forecaster and the predicted trend (direction) from the trend predictor into the adjustment function. This function uses the provided information to generate an adjusted forecast, with further details presented in Section~\ref{sec:adjustment-function}. In simple terms, the core idea of the TATS model is as follows: if the trend of the forecast generated by the value forecaster matches the trend predicted by the trend predictor, the forecast remains unchanged. Otherwise, the adjustment function modifies the forecast accordingly.

\subsection{Value Forecaster and Trend Predictor}

In this section, we describe the two components of the TATS model. First, we discuss the value forecaster, which serves as the base time series model within TATS. The value forecaster can be either a linear model, such as ARIMA, or a non-linear model, such as LSTM or Bi-LSTM.  
After applying the value forecaster to the time series data, it is crucial to evaluate how well the model captures the correct trend of the data. In other words, we need to assess how accurately the forecasts reflect the directional movements of the time series. To quantify this, we define the \emph{Trend Detection (TD) accuracy}, which measures how effectively a time series model’s forecasts capture the correct trend (direction). TD accuracy is defined as:

\begin{equation}\label{eq:TDA}
\text{TD accuracy} = \frac{\sum_{t\in \mathcal{T}} \sigma_t}{\mathcal{T}}, \ \ \text{where}\ \ \  \sigma_t = \begin{cases}
1 & \text{if } (\hat{y}_t - y_{t-1})(y_t - y_{t-1}) > 0 \\
0 & \text{otherwise}
\end{cases} \ ,
\end{equation}
where, $\hat{y}_t$, $y_t$, and $\mathcal{T}$ are defined in Table \ref{tab:Consistent notation} 
%(\textcolor{red}{It is more accurate to use $\bar{\mathcal{T}}$ as the set of time steps when there exists a trend, i.e. $\Delta y_t \neq 0$; not $\mathcal{T}$ that includes all the time steps. TDA should be defined so that a time series model that predicts $\hat{y}_{t}=y_{t-1}$ has a TDA value of zero}).%
. As TD accuracy approaches 1 (or $100 \%$), the value forecaster effectively captures the trend in the time series data.

The second component of the TATS model is the trend predictor, which uses a binary classifier to predict the direction of movement for the next time step. Common machine learning methods as well as deep learning models can be employed as the trend predictor. According to Proposition~\ref{prop:alpha-small-constant}, if the accuracy of the trend predictor exceeds the TD accuracy of the value forecaster, the TATS model can produce forecasts with lower forecasting errors and demonstrate better overall performance than the value forecaster alone.

\subsection{Adjustment Function Analysis}\label{sec:adjustment-function}

In this section, we analyze the adjustment function employed in the TATS model. For simplicity, in this section we refer to the value forecaster as VF and the trend predictor as TP. Using the notations listed in Table~\ref{tab:Consistent notation}, the adjustment function $f$ is defined as:

\begin{equation}\label{eq:basic-adjustement-definition}
    \hat{y}^{adj}_t =f(\hat{y}_t,\hat{c}_t,y_{t-1},\alpha)= I_t \hat{y}_t + (1 - I_t)(y_{t-1} + \hat{c}_t \alpha),
\end{equation}
where \( I_t \), referred to as the \emph{indicator function}, is defined by:

\begin{equation}\label{eq:Indicator-function}
    I_t = 
    \begin{cases}
        1 & \text{if } (\hat{y}_t - y_{t-1})\hat{c}_t \ge 0, \\
        0 & \text{otherwise}.
    \end{cases}
\end{equation}

The indicator function evaluates whether the trend (direction) of the forecast produced by VF aligns with the trend predicted by TP. Specifically, \( I_t = 1 \) when the two directions agree, and \( I_t = 0 \) when they conflict. Accordingly, the adjustment function~\eqref{eq:basic-adjustement-definition} behaves as follows: if \( I_t = 1 \), it returns the unmodified forecast \( \hat{y}_t \) (generated by VF); if \( I_t = 0 \), it adjusts the previous value \( y_{t-1} \) in the direction predicted by TP ($\hat{c}_t$), scaled by a hyperparameter \( \alpha \), yielding the adjusted value \( y_{t-1} + \hat{c}_t \alpha \).

Based on the design of the TATS model (see Fig.~\ref{fig:TATS_model}), four possible scenarios can arise, as illustrated in Fig.~\ref{fig:direction-prediction}. In the first scenario (denoted as \( S_1 \)), TP correctly predicts the trend of the next time step, and the VF's forecast also lies in the correct trend. Thus, the TATS model makes no adjustment, as shown in Fig.~\ref{fig:direction-prediction}(a).
In the second scenario (\( S_2 \)), the VF's forecast lies in the correct trend, while TP predicts the direction incorrectly. In this case, the TATS model activates its adjustment function, which may either reduce or increase the forecasting error. For the purpose of analysis, we assume the worst case, in which the adjustment increases the error of the VF's forecast, as shown in Fig.~\ref{fig:direction-prediction}(b).
In the third scenario (\( S_3 \)), the VF's forecast lies in the wrong trend, and TP also predicts the wrong trend. Therefore, the TATS model does not apply any adjustment (Fig.~\ref{fig:direction-prediction}(c)). In the fourth scenario (\( S_4 \)), TP correctly predicts the trend, while the VF's forecast lies in the wrong trend. In this case, the TATS model adjusts the forecast, reducing the forecasting error of VF, as illustrated in Fig.~\ref{fig:direction-prediction}(d).

In the formulation of the adjustment function (\ref{eq:basic-adjustement-definition}), \( \alpha \) can be interpreted in several ways. First, it can be set as a small constant (close to zero) that simply indicates the direction of adjustment—this is referred to as the basic version of the adjustment function. Second, \( \alpha \) can be treated as a tunable hyperparameter. In the following section, we analyze the adjustment function under the first scenario.

\subsubsection{$\alpha$ as a small positive value}\label{sec:alpha_constant}

In this section, we consider \( \alpha \) in Equation \ref{eq:basic-adjustement-definition} as a positive and sufficiently small constant. The following proposition provides a condition under which the adjustment function reduces the forecasting error of the value forecaster.

\begin{proposition}\label{prop:alpha-small-constant}
Let \( \alpha \) in the adjustment function (\ref{eq:basic-adjustement-definition}) be a positive and sufficiently small constant such that \( y_{t-1} + \hat{c}_t \alpha \approx y_{t-1} \). Then, the TATS model reduces the forecasting error of its value forecaster if \( P(D^B) > P(D^T) \).
\end{proposition}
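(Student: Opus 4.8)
The plan is to control the change in squared error one time step at a time, partition the time steps according to which of the two components is correct, and then aggregate over $\mathcal{T}$. First I would use the hypothesis on $\alpha$: since $y_{t-1}+\hat c_t\alpha\approx y_{t-1}$, the adjustment function \eqref{eq:basic-adjustement-definition} returns $\hat{y}_t$ when $I_t=1$ and $\approx y_{t-1}$ when $I_t=0$, and by \eqref{eq:Indicator-function} we have $I_t=1$ exactly when the trend of $\hat{y}_t$ (relative to $y_{t-1}$) agrees with $\hat c_t$. Matching this against Fig.~\ref{fig:direction-prediction}: in $S_1$ and $S_3$ the VF trend and the TP prediction agree (either both right or both wrong), so $I_t=1$, $\hat{y}^{adj}_t=\hat{y}_t$, and $l_t$ is unchanged; in $S_2$ and $S_4$ they disagree, so $I_t=0$ and $\hat{y}^{adj}_t\approx y_{t-1}$. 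Hence the TATS error differs from the VF error only on time steps lying in $S_2\cup S_4$.

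Next I would rewrite the hypothesis in terms of these scenarios. The TP is correct exactly on $S_1\cup S_4$ and the VF has the correct trend exactly on $S_1\cup S_2$, so $P(D^B)=P(S_1)+P(S_4)$ and $P(D^T)=P(S_1)+P(S_2)$; therefore $P(D^B)>P(D^T)$ is equivalent to $P(S_4)>P(S_2)$. For the size of the per-step change, write $a_t=\hat{y}_t-y_{t-1}$ and $b_t=\Delta y_t$, so $\hat{y}_t-y_t=a_t-b_t$, and set $\Delta l_t:=(\hat{y}^{adj}_t-y_t)^2-l_t$; on a disagreement step $\Delta l_t=b_t^2-(a_t-b_t)^2=a_t(2b_t-a_t)$. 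On $S_4$ the trend of $\hat{y}_t$ is wrong, so $a_t$ and $b_t$ have opposite signs and $\Delta l_t=-|a_t|\bigl(|a_t|+2|b_t|\bigr)<0$: the adjustment strictly reduces the error (Fig.~\ref{fig:direction-prediction}(d)). On $S_2$ the trend of $\hat{y}_t$ is correct, so $a_t,b_t$ share a sign and, by the AM--GM inequality, $\Delta l_t=a_t(2b_t-a_t)\le b_t^2=(\Delta y_t)^2$, with equality only when $\hat{y}_t=y_t$: the adjustment may raise the error, but by at most $(\Delta y_t)^2$ (the worst case of Fig.~\ref{fig:direction-prediction}(b)).

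Finally I would aggregate. Summing over $\mathcal{T}$ and taking expectations, the expected gap between the TATS error and the VF error is $P(S_2)\,\mathbb{E}[\Delta l_t\mid S_2]+P(S_4)\,\mathbb{E}[\Delta l_t\mid S_4]$, where the $S_4$ term is strictly negative and the $S_2$ term is at most $P(S_2)\,\mathbb{E}[(\Delta y_t)^2\mid S_2]$. To convert this into a strict decrease I would compare the two conditional expectations on a common footing: the distribution of the jump magnitudes $(|a_t|,|b_t|)$ reflects the data and the base forecaster, not which of the two components happens to be right, so $-\mathbb{E}[\Delta l_t\mid S_4]\ge\mathbb{E}[\Delta l_t\mid S_2]$ (the gap is $2\,\mathbb{E}[a_t^2]$); combined with $P(S_4)>P(S_2)$ this gives $P(S_4)\bigl(-\mathbb{E}[\Delta l_t\mid S_4]\bigr)>P(S_2)\,\mathbb{E}[\Delta l_t\mid S_2]$, i.e., the expected TATS error is strictly smaller than the expected VF error. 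Carrying the inequalities of the previous paragraph with equality then yields the explicit approximate lower bound on the error reduction referred to in the introduction.

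The step I expect to be the main obstacle is this last comparison. The crude per-step bounds by themselves — increase at most $(\Delta y_t)^2$ on $S_2$, decrease at least $0$ on $S_4$ — do not force the aggregate to be negative, because the gain on an individual $S_4$ step can be arbitrarily small when $\hat{y}_t$ lies just barely on the wrong side of $y_{t-1}$. Making the argument rigorous therefore rests on an assumption that conditioning on ``TP wrong'' versus ``TP right'' does not systematically select for smaller VF errors — equivalently, that the magnitude profile of $(a_t,b_t)$ is, at least approximately, the same on $S_2$ and on $S_4$ — which, together with the ``sufficiently small $\alpha$'' simplification, is precisely what makes the worst-case reading of Fig.~\ref{fig:direction-prediction} and the clean conclusion legitimate.
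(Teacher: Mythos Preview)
Your scenario-based decomposition matches the paper's proof essentially step for step: the paper likewise splits into $S_1$--$S_4$, notes that only $S_2$ and $S_4$ change the loss (with $\hat{y}^{adj}_t\approx y_{t-1}$ there), and then compares the expected gain on $S_4$ against the expected damage on $S_2$. Your treatment is, if anything, more careful than the paper's: you obtain $P(S_4)>P(S_2)$ directly from the identity $P(D^B)-P(D^T)=P(S_4)-P(S_2)$ (the paper instead assumes $D^B$ and $D^T$ are independent and writes each $P(S_i)$ as a product), and you explicitly flag the homogeneity assumption on the magnitude profile of $(|a_t|,|b_t|)$ across $S_2$ and $S_4$ that the paper uses silently when it places the same factor $|l_t-(\Delta y_t)^2|$ in both terms of its expected-loss formula.
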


\begin{proof}

We define \( \delta_t^{S_i} \) as the change in loss (error) under scenario \( S_i \), for \( i = 1, 2, 3, 4 \), given by:

\begin{equation}
    \delta_t^{S_i} = 
    \begin{cases}
        l_t - (\hat{y}^{adj}_t - y_t)^2 = 0  & \text{for } i=1 \\[8pt]
        l_t - (\hat{y}^{adj}_t - y_t)^2 = l_t - (\Delta y_t)^2  \le 0  & \text{for } i=2 \\[8pt]
        l_t - (\hat{y}^{adj}_t - y_t)^2 = 0  & \text{for } i=3 \\[8pt]
        l_t - (\hat{y}^{adj}_t - y_t)^2 = l_t - (\Delta y_t)^2  \ge 0  & \text{for } i=4
    \end{cases}
\end{equation}

Note that in scenarios \( S_2 \) and \( S_4 \), we have \( \hat{y}^{\text{adj}}_t \approx y_{t-1} \) and \( (\hat{y}_t^{\text{adj}} - y_t)^2 \approx (\Delta y_t)^2 \), since \( \alpha \) is a sufficiently small constant. Moreover, for all \( t \), we assume the worst case scenario for \( S_2 \), meaning \( \delta_t^{S_2} \leq 0 \).

We define \( \Delta l_t \) as the expected change in loss at time \( t \). Taking into account the probability of each scenario, we obtain:
\begin{equation}
 \Delta l_t=\sum_{i=1}^{4}\delta_t^{S_i}P(S_i)=|l_t-(\Delta y_t)^2|P(D^B)(1-P(D^T))-|l_t-(\Delta y_t)^2|(1-P(D^B))P(D^T),
\end{equation}
where 
\begin{equation}
    P(S_i) = 
    \begin{cases}
        P(D^B)P(D^T) & \text{for } i=1 \\[8pt]
        (1-P(D^B))P(D^T)  & \text{for } i=2 \\[8pt]
        (1-P(D^B))(1-P(D^T))  & \text{for } i=3 \\[8pt]
        P(D^B)(1-P(D^T))  & \text{for } i=4
    \end{cases}\ \ \ \ .
\end{equation}
By taking the expectation over 
$t$, we obtain the total expected change in loss as:
\begin{equation}\label{eq:expected-loss-change-zero-alpha}
    E_{t\in \mathcal{T}}(\Delta l_t)=E(|l_t-(\Delta y_t)^2|)(P(D^B)(1-P(D^T))-(1-P(D^B))P(D^T)). 
\end{equation}
Therefore, to reduce the value forecaster's loss, the expected loss change \( E_{t\in \mathcal{T}}(\Delta l_t) \) must be positive. This condition requires:
\[
P(D^B)(1 - P(D^T)) > (1 - P(D^B)) P(D^T),
\]
which simplifies as follows:
\begin{equation}
    \begin{split}
        & P(D^B)(1 - P(D^T)) > (1 - P(D^B)) P(D^T) \\
        & \Rightarrow P(D^B) - P(D^B) P(D^T) > P(D^T) - P(D^B) P(D^T) \\
        & \Rightarrow P(D^B) > P(D^T).
    \end{split}
\end{equation}

Thus, the condition \( P(D^B) > P(D^T) \) ensures that the TATS model yields a lower forecasting error than its value forecaster.
\end{proof}

Proposition~\ref{prop:alpha-small-constant} shows that, under certain conditions, the TATS model can reduce the forecasting error of its value forecaster. The following corollary provides an approximate lower bound for this error reduction.

\begin{corollary}\label{corollary:LBapproximate-zero-alpha}
Let \( \alpha \) in the adjustment function (\ref{eq:basic-adjustement-definition}) be a positive and sufficiently small constant such that \( y_{t-1} + \hat{c}_t \alpha \approx y_{t-1} \), and assume \( P(D^B) > P(D^T) \). Then,
\begin{equation}
E\left( \left| l_t - (\Delta y_t)^2 \right| \right)\big(P(D^B) - P(D^T)\big)
\end{equation}
provides an approximate lower bound on the forecasting error reduction achieved by the TATS model.
\end{corollary}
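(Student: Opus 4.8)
The plan is to obtain the bound directly from the error-change identity already established in the proof of Proposition~\ref{prop:alpha-small-constant}, so that essentially no new computation is needed. Recall that there the total expected change in loss was shown to be
\[
E_{t\in\mathcal{T}}(\Delta l_t) = E\big(|l_t-(\Delta y_t)^2|\big)\big(P(D^B)(1-P(D^T)) - (1-P(D^B))P(D^T)\big),
\]
which is equation~\eqref{eq:expected-loss-change-zero-alpha}. The first step is the purely algebraic simplification $P(D^B)(1-P(D^T)) - (1-P(D^B))P(D^T) = P(D^B) - P(D^T)$ (the cross terms $P(D^B)P(D^T)$ cancel), so that the expected per-step error reduction equals $E(|l_t-(\Delta y_t)^2|)(P(D^B) - P(D^T))$ exactly under the modelling assumptions used in that proof.

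The second step is to articulate in what sense this expression is an \emph{approximate lower bound} on the error reduction actually realized by the TATS model, rather than its exact value. Two approximations entered the derivation. First, the linearization $\hat{y}^{adj}_t \approx y_{t-1}$, hence $(\hat{y}^{adj}_t - y_t)^2 \approx (\Delta y_t)^2$ in scenarios $S_2$ and $S_4$, which is legitimate precisely because $\alpha$ is taken sufficiently small; this is the source of the word ``approximate''. Second, the worst-case hypothesis $\delta_t^{S_2}\le 0$ was imposed for every $t$, i.e. in scenario $S_2$ the adjustment is assumed never to decrease the error, whereas in $S_4$ the inequality $\delta_t^{S_4}=l_t-(\Delta y_t)^2\ge 0$ is automatic. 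Since in general $\delta_t^{S_2} = l_t - (\Delta y_t)^2 \ge -|l_t-(\Delta y_t)^2|$, replacing the genuine $S_2$ contribution by its worst-case value can only decrease $\Delta l_t$; hence the right-hand side of~\eqref{eq:expected-loss-change-zero-alpha} underestimates the true expected error reduction. Taking expectations over $t\in\mathcal{T}$ and combining this underestimation with the $O(\alpha)$ linearization then gives that $E(|l_t-(\Delta y_t)^2|)(P(D^B) - P(D^T))$ is an approximate lower bound on the forecasting-error reduction achieved by the TATS model.

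Finally I would observe that the bound is non-vacuous: under the standing hypothesis $P(D^B) > P(D^T)$ and since $E(|l_t-(\Delta y_t)^2|)\ge 0$, the quantity is nonnegative, and strictly positive whenever the value forecaster does not already reproduce the first differences of the series. The main obstacle here is not computational — the algebra reduces to the cancellation noted above — but expository: one must cleanly separate the two distinct sources of slack (the vanishing-$\alpha$ linearization error and the directional worst case for $S_2$), explain why the $S_4$ term needs no such concession, and verify that both effects push the estimate in the same direction, so that the stated expression is genuinely a lower bound and not merely an approximation of the error reduction.
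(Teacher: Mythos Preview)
Your proposal is correct and takes essentially the same approach as the paper: the paper's proof is the single sentence ``This result follows directly from Equation~\eqref{eq:expected-loss-change-zero-alpha},'' and you have reproduced exactly that derivation, together with the algebraic simplification $P(D^B)(1-P(D^T))-(1-P(D^B))P(D^T)=P(D^B)-P(D^T)$. Your additional discussion separating the two sources of slack (the small-$\alpha$ linearization and the worst-case treatment of $S_2$) is more than the paper itself supplies, but is fully consistent with it.
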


\begin{proof}
This result follows directly from Equation \ref{eq:expected-loss-change-zero-alpha}.
\end{proof}

\begin{figure}
    \centering
    \includegraphics[width=0.9\linewidth]{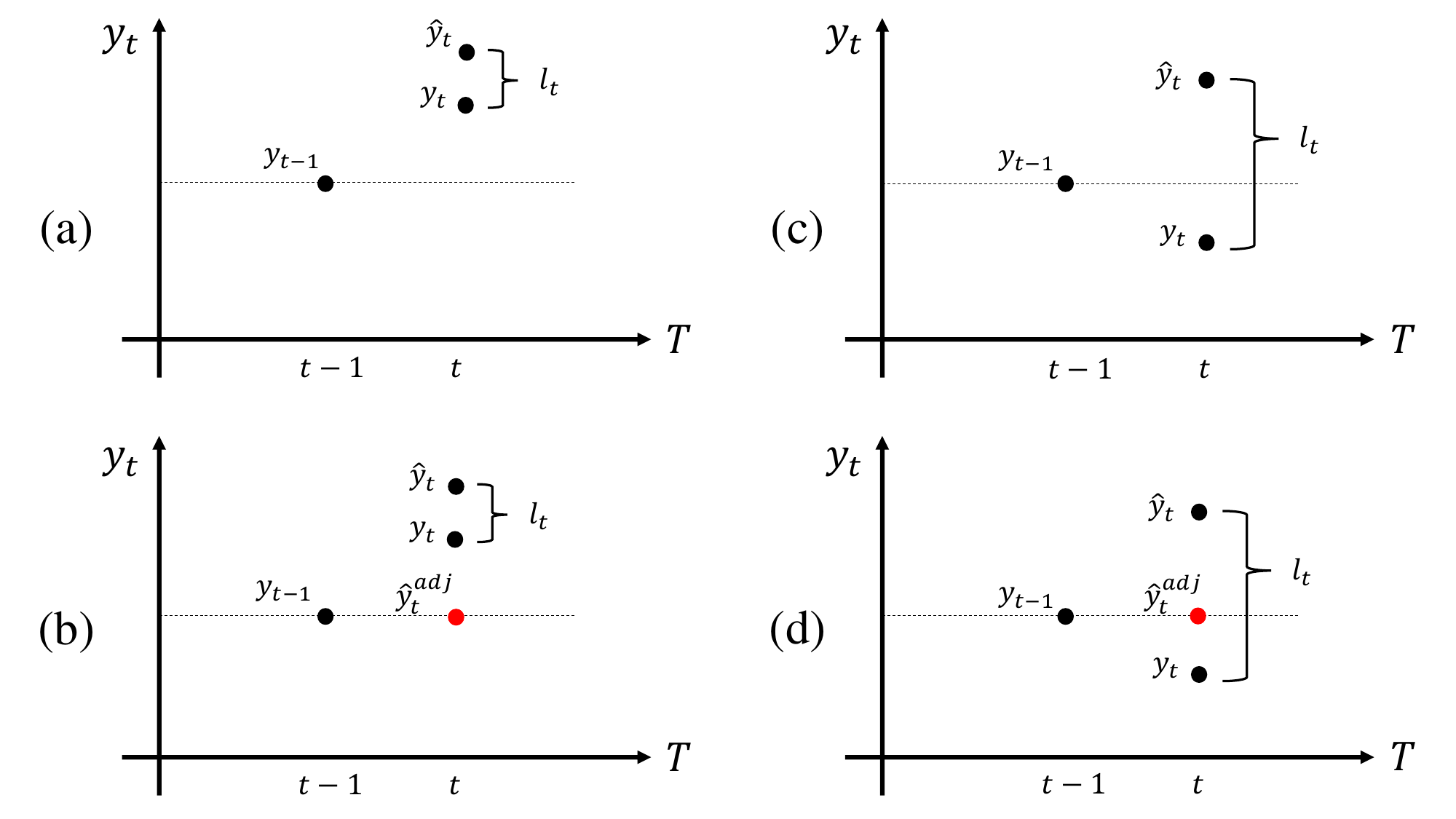}
\caption{(a) Both the trend predictor (TP) and value forecaster (VF) predict the trend correctly; (b) VF predicts the trend correctly, but TP predicts incorrectly; (c) Both TP and VF predict the trend incorrectly; (d) VF predicts the trend incorrectly, while TP predicts correctly. In all cases, \( \alpha \) is positive and sufficiently small ($y_{t-1} + \hat{c}_t \alpha \simeq y_{t-1}$).}

  \label{fig:direction-prediction}
\end{figure}

\subsubsection{$\alpha$ as a hyperparameter}\label{sec:alpha-hyperparameter}

In Section \ref{sec:alpha_constant}, $\alpha$ is treated as a small positive value (close to zero). However, in practice, it can be considered a hyperparameter and assigned different values. As shown in Section \ref{sec:experimantal_results}, running the TATS model with various $\alpha$ values reveals that for some choices of $\alpha$, the model achieves lower forecasting error than when $\alpha$ is close to zero.

\section{Experiment Results}\label{sec:experimantal_results}

This section focuses on the daily gold price in 2023, as a financial time series consisting of 250 observations. We assign $70\%$ of the data to the training set and $30\%$ to the test set. The main purpose is to forecast the value of gold for the next day (next time step). This section is divided into three subsections.

First, we focus on trend prediction, aiming to predict the direction (increasing or decreasing) of the gold price for the next day. In this subsection, we use several binary classifiers and compare their performance to identify the model with the highest accuracy, which is then chosen as the trend predictor for the TATS model. Second, we apply two common neural network models, LSTM and Bi-LSTM, to forecast the value of the gold price for the next day. Finally, we construct two TATS models and demonstrate that both outperform LSTM and Bi-LSTM in terms of forecasting accuracy.

\subsection{Trend Predictor (Binary Classifiers)}

In this section, the goal is to predict the trend of the daily gold price for the next day. To achieve this, we use not only the gold price itself but also additional features that influence the gold price \cite{kristjanpoller2015gold}. These features include the FTSE (London Stock Market Index), the Dow Jones Index, the S\&P 500 Index, the USD/EUR exchange rate, the USD/JPY exchange rate, and the oil price, all obtained from Yahoo Finance.

Several machine learning models are considered, including Logistic Regression, Naive Bayes (NB), K-Nearest Neighbors (KNN), Support Vector Machine (SVM), Decision Tree, Random Forest, and XGBoost. We use 5-fold cross-validation for hyperparameter tuning, and we report three accuracy measures for each model: training accuracy, cross-validation accuracy, and test accuracy.

\begin{table}[h]
  \caption{Results for Trend Predictors (Binary Classifiers)}
  \label{tab:HBC-results}
  \centering
  \begin{tabular}{lcccc}
    \toprule
    
    No.     &  Model & Train accuracy(\%) & Cross-validation accuracy (\%) &Test accuracy(\%)   \\
    \midrule
  1& Logistic Regression &64.57 & 61.71  & 50.66 \\
   2& NB & 66.85 & 58.85 & 56.00 \\
  3 & KNN &68.57 & 58.28  & 52.00 \\ 
     
   4 &SVM& 65.14 & 57.70  & 53.33\\
    
    4& Decision Tree & 76.57 & 57. 71&46.66\\

   5& Random Forest &  89.14 & 60.57&57.33\\

   6 & XGBoost &$75.14$ & 65.71 & 58.66\\

    \bottomrule
  \end{tabular}
\end{table}

The results in Table \ref{tab:HBC-results} show that most models perform poorly, with test accuracy close to $50\%$. This indicates that the gold price is volatile and challenging to model, making trend prediction for this time series a difficult task. Among all models, XGBoost demonstrates the best performance, achieving $75.14\%$ training accuracy and $58.66\%$ test accuracy. Therefore, in Section \ref{sec:experiment-TATS}, we use XGBoost as the trend predictor for the TATS models.

\subsection{Value Forecaster (Base Time Series Models)}
In this section, we apply LSTM and Bi-LSTM models to the daily gold price data to forecast the price for the next day (next time step). For both models, the lag is set to two, meaning they use data from the two preceding time steps to forecast the price at each step.

\begin{table}[h]
  \caption{Results for Value Forecasters (Base Time Series Models)}
  \label{tab:value_forecator-results}
  \centering
  \begin{tabular}{lccccc}
    \toprule
    
    Model   &train/test &TDA&  MSE & MAE & MAPE   \\
       
    \midrule
  LSTM  &train&52.36 $\%$ & 324.47  & 13.78 & 0.71 \\
     &test&51.29 $\%$ &399.31  & 16.05& 0.82  \\
 
    \cmidrule(lr){1-6}  
    Bi-LSTM  &train&52.41 $\%$ & 300.94  & 13.21 & 0.68 \\
     &test&51.20 $\%$ &371.92  & 15.35& 0.78  \\

    \bottomrule
  \end{tabular}
  {\scriptsize Note: TDA represents the trend detection accuracy, as defined in (\ref{eq:TDA}).
}
\end{table}

Results in Table \ref{tab:value_forecator-results} show that the Bi-LSTM outperforms the LSTM; however, both models exhibit very poor performance in trend detection (TD accuracy is close to \(50\%\)). Consequently, about half of the forecasts produced by these models are in the wrong trend (direction), indicating that they are not reliable tools for decision-making. In other words, common metrics such as MAE, MSE, and MAPE are not sufficient to fully evaluate a time series model. Assessing how well a model captures the trend or directional movement of a time series is also crucial; therefore, measuring TD accuracy is important for each time series model.

\subsection{TATS model}\label{sec:experiment-TATS}
In this section, we build two TATS models and evaluate their performance on daily gold price data.

First, we construct a TATS model using an LSTM as the value forecaster and XGBoost as the trend predictor. After training the LSTM on the data, we obtain
\[
E\bigl(|l_t - (\Delta y_t)^2|\bigr) = 180.45,
\]
computed on the training set. The LSTM achieves a trend detection (TD) accuracy of \(52.36\%\) on the training set (an estimate of $P(D^T)$), while XGBoost attains a training accuracy of \(75.14\%\) (an estimate of $P(D^B)$). Since $P(D^B)>P(D^T)$, Proposition~\ref{prop:alpha-small-constant} indicates that the TATS model can achieve a lower forecasting error than the LSTM alone. According to Corollary~\ref{corollary:LBapproximate-zero-alpha}, applying the TATS model (with a small value of $\alpha$) yields a forecasting model whose training-set MSE is at least
\[
E\bigl(|l_t - (\Delta y_t)^2|\bigr) \bigl(P(D^B)-P(D^T) \bigr)=180.45 \times (0.7514 - 0.5236) = 41.10
\]
lower than that of the LSTM (its value forecaster). Results in Table \ref{tab:TATS-LSTM-results} show that the TATS model with \(\alpha = 0.1\) achieves an MSE of 250.20, which is 74.27 lower than the MSE of the LSTM model. This empirical result aligns with the theoretical guarantee established in Corollary~\ref{corollary:LBapproximate-zero-alpha}.
Additional results indicate that, except for \(\alpha = 20\), all other values of \(\alpha\) yield better forecasting performance than the LSTM.

\begin{table}[h]
  \caption{Results for TATS model with LSTM (value forecastor) and XGBoost (trend predictor)}
  \label{tab:TATS-LSTM-results}
  \centering
  \begin{tabular}{lccccccc}
    \toprule
    
    Model   &train/test &TDA&  MSE & MAE & MAPE & Diff & R-Diff    \\
       
    \midrule
  LSTM  &train&52.36 $\%$ & 324.47  & 13.78 & 0.71 &-&-\\
     &test&51.29 $\%$ &399.31  & 16.05& 0.82 & -&-\\
 
    \cmidrule(lr){1-8}  
    
    TATS (LSTM, XGBoost, $\alpha=0.1$)  &train&75.14 $\%$ &250.20  & 11.63 & 0.60 & 74.27&22.88$\%$\\
     &test&58.66 $\%$ &332.97  & 14.10 & 0.72 & 66.34&16.61$\%$\\
 
    \cmidrule(lr){1-8}  

    TATS (LSTM, XGBoost, $\alpha=1$)  &train&75.14 $\%$ &245.44  & 11.42& 0.59 &79.03&24.35$\%$\\
     &test&58.66 $\%$ & 331.52 & 14.02 &  0.71 & 67.79&16.97$\%$\\
 
    \cmidrule(lr){1-8}  

    TATS (LSTM, XGBoost, $\alpha=2$)  &train&75.14 $\%$ &241.24  & 11.27 & 0.58 & 83.23&25.65$\%$\\
     &test&58.66 $\%$ & {\color{red}330.95}  & 13.95 & 0.71 & 68.36&17.11$\%$\\
 
    \cmidrule(lr){1-8}

     TATS (LSTM, XGBoost, $\alpha=5$)  &train&75.14 $\%$ & {\color{red}235.51}  & 11.12 & 0.57 &88.96&27.41 $\%$\\
     &test&58.66 $\%$ & 335.80  & 13.85 &0.70 & 63.51&15.90 $\%$\\
 
    \cmidrule(lr){1-8}  
     TATS (LSTM, XGBoost, $\alpha=10$)  &train&75.14 $\%$ &248.84  & 11.75 &0.61 & 75.63 &23.30 $\%$\\
     &test&58.66 $\%$ & 365.74  & 14.40 &0.73 & 33.57 &8.40 $\%$\\

    \cmidrule(lr){1-8}  
     TATS (LSTM, XGBoost, $\alpha=20$) &train&75.14 $\%$ & 361.35  & 15.46 &  0.80 & $-36.88$ &$-11.36 \%$\\
     &test&58.66 $\%$ & 507.64  & 17.42 &   0.88 & $-108.33$ & $-27.12 \%$\\

    \bottomrule
  \end{tabular}

{\scriptsize Note: TDA represents the trend detection accuracy, as defined in (\ref{eq:TDA}). \textit{Diff} for each model denotes the difference between the model's MSE and that of the LSTM. \textit{R-Diff} is calculated as \textit{Diff} divided by the LSTM's MSE.}
\end{table}

Second, we construct a TATS model using a Bi-LSTM as the value forecaster and XGBoost as the trend predictor. Training the Bi-LSTM on the data yields
\[
E\bigl(|l_t - (\Delta y_t)^2|\bigr) = 147.70,
\]
computed on the training set. The Bi-LSTM performs poorly in capturing the gold price trend, achieving a trend detection (TD) accuracy of 52.41\%, which serves as an estimate of \(P(D^T)\).

Based on Corollary~\ref{corollary:LBapproximate-zero-alpha} and the training accuracy of 75.14\% (an estimate of \(P(D^B)\)) achieved by XGBoost, the TATS model with a small value of \(\alpha\) is guaranteed to reduce the training-set MSE by at least
\[
E\bigl(|l_t - (\Delta y_t)^2|\bigr)\,\bigl(P(D^B)-P(D^T)\bigr) = 147.70 \times (0.7514 - 0.5241) = 33.57
\]
compared to the Bi-LSTM (its value forecaster). Based on results in Table \ref{tab:TATS-Bi-LSTM-results}, the TATS model with \(\alpha = 0.1\) achieves a training-set MSE of 245.03, which is 55.91 lower than that of the Bi-LSTM model. This result is consistent with the theoretical lower bound provided by Corollary~\ref{corollary:LBapproximate-zero-alpha}. 
The TATS model achieves its lowest MSE on the test set with \(\alpha = 2\) (320.75) and on the training set with \(\alpha = 5\) (233.91). Based on MAE, the model with \(\alpha = 5\) performs best on both the training and test sets. Note that the TD accuracy of a TATS model corresponds directly to the prediction accuracy of its trend predictor.

\begin{table}[h]
  \caption{Results for TATS model with Bi-LSTM (value forecastor) and XGBoost (trend predictor)}
  \label{tab:TATS-Bi-LSTM-results}
  \centering
  \begin{tabular}{lcccccccc}
    \toprule
    
    Model   &train/test &TDA&  MSE & MAE & MAPE & Diff & R-Diff    \\
       
    \midrule
  Bi-LSTM  &train&52.41 $\%$ & 300.94  & 13.21 & 0.68 &-&-\\
     &test&51.20 $\%$ &371.92  & 15.35& 0.78 & -&-\\
 
    \cmidrule(lr){1-8}  
    
    TATS (Bi-LSTM, XGBoost, $\alpha=0.1$)  &train&75.14 $\%$ &245.03  & 11.50 & 0.59 & 55.91&$18.57 \%$\\
     &test&58.66 $\%$ &323.51  & 13.81 & 0.70& 48.41&$13.01 \%$\\
 
    \cmidrule(lr){1-8}  

    TATS (Bi-LSTM, XGBoost, $\alpha=1$)  &train&75.14 $\%$ &240.97 &  11.31& 0.58 &59.97&$19.92 \%$\\
     &test&58.66 $\%$ & 321.73 & 13.71 &  0.70 & 50.19&$13.49 \%$\\
 
    \cmidrule(lr){1-8}  

    TATS (Bi-LSTM, XGBoost, $\alpha=2$)  &train&75.14 $\%$ &237.52  & 11.18 & 0.58 & 63.42&$21.07 \%$\\
     &test&58.66 $\%$ & {\color{red}320.75}  & 13.63 &0.69 & 51.17&13.75$\%$\\
 
    \cmidrule(lr){1-8}

     TATS (Bi-LSTM, XGBoost, $\alpha=5$)  &train&75.14 $\%$ & {\color{red}233.91}  & 11.07 &  0.57 &67.03&$22.27\%$\\
     &test&58.66 $\%$ & 324.23  & 13.50 &0.68 & 47.69&$12.82 \%$\\
 
    \cmidrule(lr){1-8}  
     TATS (Bi-LSTM, XGBoost, $\alpha=10$)  &train&75.14 $\%$ &250.31  & 11.78 & 0.61 & $50.63$ &$16.82 \%$\\
     &test&58.66 $\%$ & 351.37  & 13.98 &0.71& 20.55 &$5.52 \%$\\

    \cmidrule(lr){1-8}  
     TATS (Bi-LSTM, XGBoost, $\alpha=20$) &train&75.14 $\%$ &367.22 & 15.61 &  0.81 & $-66.28$ &$-22.02 \%$\\
     &test&58.66 $\%$ &  485.63  & 16.86 &  0.86 & $-113.71$ &$-30.57 \%$\\

    \bottomrule
  \end{tabular}

{\scriptsize Note: TDA represents the trend detection accuracy, as defined in (\ref{eq:TDA}). \textit{Diff} for each model denotes the difference between the model's MSE and that of the Bi-LSTM. \textit{R-Diff} is calculated as \textit{Diff} divided by the Bi-LSTM's MSE.}

\end{table}

\section{Conclusion}\label{sec:conclusion}
Time series data appears in various domains, including healthcare, finance, manufacturing, and marketing. Over the past years, two main challenges in time series analysis have been studied. The first is trend prediction, which focuses on identifying the directional movement of the time series for the next time step. The second is value forecasting, which aims to forecast the actual value of the time series at the next time step. In this work, we unified these two challenges within a single framework and proposed the TATS model. The TATS model consists of three components: (1) a trend predictor, which is a binary classifier that predicts the directional movement of the time series for the next step; (2) a value forecaster, which employs common time series models such as ARIMA, LSTM, and Bi-LSTM to forecast the value of the time series at the next time step; and (3) an adjustment function, which modifies the forecast generated by the value forecaster based on the trend predicted by the trend predictor.
We proved that the TATS model can achieve lower forecasting errors compared to its value forecaster under a certain condition. In addition, we provided a mathematical formula that approximates the lower bound of the forecasting error reduction achievable by the TATS model.

To evaluate the effectiveness of the TATS model, we applied it to the daily gold price time series, which is known to be particularly challenging. We used data from 2023 and first applied popular neural network models, LSTM and Bi-LSTM. The results showed that these models could achieve low forecasting errors; however, they performed poorly in capturing the trend of the gold price series. In other words, approximately half of the forecasts made by LSTM and Bi-LSTM were in the wrong trend. These results indicate that common metrics like MSE and MAE are not sufficient to fully assess the performance of a time series model. Therefore, we also used trend detection accuracy to evaluate how well the models capture the trends (directional movements) of the time series. To obtain a time series model with lower forecasting error and higher trend detection accuracy compared to LSTM and Bi-LSTM, we constructed two TATS models. The first model used an LSTM as the value forecaster and XGBoost as the trend predictor, while the second model employed a Bi-LSTM as the value forecaster and XGBoost as the trend predictor. The first TATS model outperformed the LSTM baseline, achieving a $17.11\%$ reduction in forecasting error on the test set and an MSE of 330.95. Similarly, the second TATS model outperformed the Bi-LSTM baseline, reducing forecasting error by $13.75\%$ and achieving an MSE of 320.75.

Several aspects of this work can be extended. First, the trend predictor can be improved by incorporating large language models and news data. Second, the adjustment function can be replaced with a more flexible formulation. Third, while our focus in this paper was on gold prices as a case study in financial time series, future studies can apply this approach to other areas, such as healthcare, where even small improvements in forecasting can make a significant difference.

\printbibliography

\newpage
\section*{Appendix: A Possible Approach When the Trend Predictor Is Unavailable}

The TATS model is designed to leverage the trend of a time series to reduce forecasting errors. A natural question arises: what if the TATS model cannot be implemented because the trend predictor's accuracy is insufficient or a reliable trend predictor is not achievable? Even in such cases, incorporating trend information remains important. The following example illustrates this concept in a simple setting.
 \\

\textbf{Example 1.} Let $\{y_t\}_{t=1,2,3,4,5}$ be a time series with values $\{7,5,9,7,8\}$. Consider two time series models for forecasting $\{y_t\}$: the first model generates forecasts $\{\hat{y}^1_t\}_{t=1,2,3,4,5}$ with values $\{8,2,14,6,10\}$, while the second model generates forecasts $\{\hat{y}^2_t\}_{t=1,2,3,4,5}$ with values $\{6,8,4,8,6\}$, as illustrated in Fig.~\ref{fig:appendix_example1}. As can be seen, the MSE of both models is the same. However, the trend detection accuracy (TDA), as defined in (\ref{eq:TDA}), differs substantially: the first model achieves a TDA of 1, whereas the second model has a TDA of 0. If evaluation is based solely on the MSE, the two models would appear equivalent. In contrast, the TDA reveals that the models are not equivalent, and that the first model, unlike the second, successfully captures the trend of the actual time series. For example, in the financial domain, if the actual time series represents the price of a stock, an investor who buys and sells based on the first model would achieve significantly better outcomes compared to using the second model. 
To illustrate this, suppose the investor holds some stocks at time step $t=2$ and must choose between two options: (1) buy more stock at $t=2$ and sell at $t=3$, or (2) sell all stocks now at $t=2$. Model~1 indicates that the stock price will increase from $t=2$ to $t=3$, suggesting that buying more and selling later at $t=3$ is profitable. In contrast, Model~2 predicts a price decrease, suggesting that selling immediately is optimal. The actual data show that the price increases from $t=2$ to $t=3$, so following Model~1 results in a gain. 
In another domain, if the actual time series represents health-related data, Model~1 would be more reliable for decision-making by a medical team. This indicates that, although both models have the same MSE, Model~1 is superior in practice, as it better preserves the trend information of the actual time series compared to Model~2.

\begin{figure}[H]
    \centering
    \includegraphics[width=0.8\linewidth]{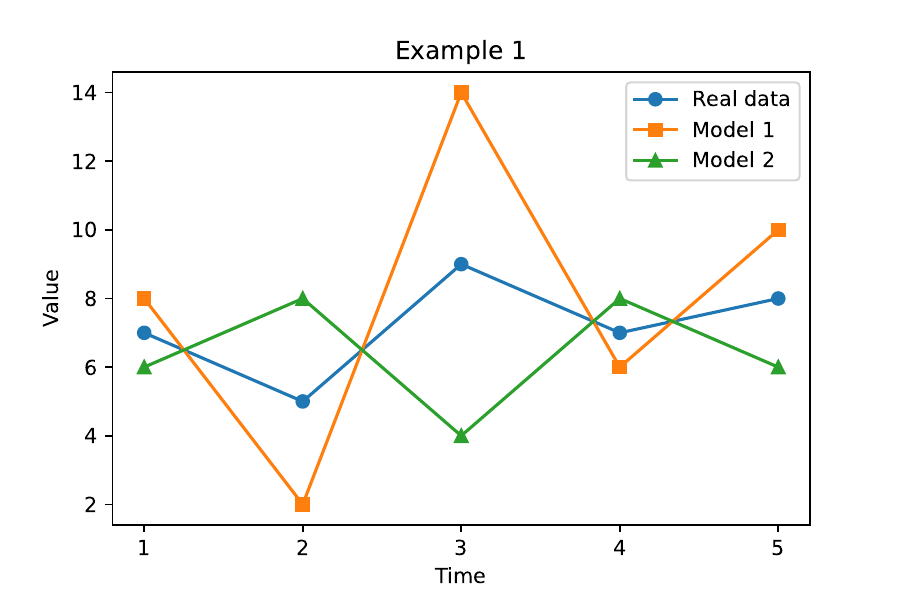}
    \caption{The actual time series (real data) and the forecasts generated by two different models in Example~1. Both models have the same MSE; however, Model~1 successfully captures the time series trend (directional movements).}

    \label{fig:appendix_example1}
\end{figure}

Example~1 demonstrates that incorporating trend information can lead to time series forecasts that are more useful for decision-making. Loss functions commonly used in time series modeling, which focus solely on forecasting error, fail to distinguish between Model~1 and Model~2 in Example 1. Consequently, a possible approach is to employ trend-aware loss functions that penalize forecasts moving in the wrong trend or direction. Specifically, one may consider a trend-aware loss function of the form
\[
\sum_t (y_t - \hat{y}_t)^2 + \gamma_t,
\]
where $\gamma_t$ is a penalty applied when the forecast moves in the wrong direction, that is, when
$
(\hat{y}_t - y_{t-1})(y_t - y_{t-1}) < 0.
$
If this trend-aware loss function is applied to the two models in Example~1, the first model is preferred, as it achieves a lower loss by correctly capturing the trend of the time series, whereas the second model fails to do so. It should be noted that, unlike the TATS model—which guarantees a reduction in forecasting error—trend-aware loss functions in time series models may worsen the forecasting accuracy. Nevertheless, when constructing the TATS model is not feasible, employing a trend-aware loss function to incorporate trend information remains a possible approach for further investigations and future studies.

\end{document}